
\documentclass[sigconf]{aamas}  

\AtBeginDocument{%
  \providecommand\BibTeX{{%
    \normalfont B\kern-0.5em{\scshape i\kern-0.25em b}\kern-0.8em\TeX}}}

\usepackage{nicefrac}
\usepackage{microtype} 
\usepackage{float,array}
\usepackage{amssymb,amsmath,bm,amsthm}
\usepackage[caption=false,font=scriptsize]{subfig}
\usepackage[ruled,vlined]{algorithm2e}
\usepackage{enumerate}
\usepackage{color,colortbl}
\usepackage{epstopdf}
\usepackage{booktabs, multirow}
\definecolor{darkgreen}{rgb}{0, 0.5, 0}
\definecolor{red}{rgb}{1, 0, 0}
\definecolor{purple}{rgb}{0.5, 0, 0.5}
\usepackage{chngpage}

\usepackage{mfirstuc}
\usepackage{mathtools}
\usepackage{lettrine}
\usepackage{epstopdf}

\usepackage{flushend}

\DeclareMathOperator*{\argmin}{argmin}

\newcommand\algoname{META}

\newcommand\ie{\textit{i.e.}}
\newcommand\eg{\textit{e.g.}}
\newcommand\st{\textit{s.t.}}
\newcommand\wrt{\textit{w.r.t.}}

\newcommand\doubleE{\mathbb{E}}
\newcommand\doubleP{\mathbb{P}}

\newcommand\scriptS{\mathcal{S}}
\newcommand\scriptE{\mathcal{E}}

\newtheorem*{theorem*}{Theorem}
\newtheorem{fact}{Fact}
\newtheorem*{proposition*}{Proposition}
\newtheorem*{corollary*}{Corollary}

\setcopyright{ifaamas}  
\copyrightyear{2020} 
\acmYear{2020} 
\acmDOI{} 
\acmPrice{} 
\acmISBN{} 
\acmConference[AAMAS'20]{Proc.\@ of the 19th International Conference on Autonomous Agents and Multiagent Systems (AAMAS 2020)}{May 9--13, 2020}{Auckland, New Zealand}{B.~An, N.~Yorke-Smith, A.~El~Fallah~Seghrouchni, G.~Sukthankar (eds.)}  

\begin{document}
\title{\algoname{}-Learning State-based Eligibility Traces for More Sample-Efficient Policy Evaluation}
\titlenote{The first three authors contributed equally to this paper. This paper has a description video at: \url{https://www.youtube.com/watch?v=3Ud8Ils1_mo}}


\author{Mingde Zhao$^{*}$}
\affiliation{%
  \institution{Mila, McGill University}
}
\email{mingde.zhao@mail.mcgill.ca}

\author{Sitao Luan$^{*}$}
\affiliation{%
  \institution{Mila, McGill University}
}
\email{sitao.luan@mail.mcgill.ca}

\author{Ian Porada$^{*}$}
\affiliation{%
  \institution{Mila, McGill University}
}
\email{ian.porada@mail.mcgill.ca}

\author{Xiao-Wen Chang}
\affiliation{%
  \institution{McGill University}
}
\email{chang@cs.mcgill.ca}

\author{Doina Precup}
\affiliation{%
  \institution{DeepMind, Mila, McGill University}
}
\email{dprecup@cs.mcgill.ca}

\begin{abstract}
Temporal-Difference (TD) learning is a standard and very successful reinforcement learning approach, at the core of both algorithms that learn the value of a given policy, as well as algorithms which learn how to improve policies. TD-learning with eligibility traces provides a way to boost sample efficiency by temporal credit assignment, \ie{} deciding which portion of a reward should be assigned to predecessor states that occurred at different previous times, controlled by a parameter $\lambda$. However, tuning this parameter can be time-consuming, and not tuning it can lead to inefficient learning. For better sample efficiency of TD-learning, we propose a meta-learning method for adjusting the eligibility trace parameter, in a state-dependent manner. The adaptation is achieved with the help of auxiliary learners that learn distributional information about the update targets online, incurring roughly the same computational complexity per step as the usual value learner. Our approach can be used both in on-policy and off-policy learning. We prove that, under some assumptions, the proposed method improves the overall quality of the update targets, by minimizing the overall target error. This method can be viewed as a plugin to assist prediction with function approximation by meta-learning feature (observation)-based $\lambda$ online, or even in the control case to assist policy improvement. Our empirical evaluation demonstrates significant performance improvements, as well as improved robustness of the proposed algorithm to learning rate variation.
\end{abstract}

\keywords{Reinforcement Learning; Meta Learning; Hyperparameter Adaptation; Machine Learning; Temporal Difference Learning}  

\maketitle

\section{Introduction}\label{sec:introduction}

Eligibility trace-based policy evaluation (prediction) methods, \eg, TD($\lambda$), use geometric sequences, controlled by a parameter $\lambda$, to weight multi-step returns and assemble compound update targets \cite{sutton2018reinforcement}. Given a properly set $\lambda$, using $\lambda$-returns as update targets lowers the sample complexity (\eg, the number of steps to achieve certain precision of policy evaluation) or equivalently, improves the learning speed and accuracy. 

Sample complexity in Reinforcement Learning (RL) is sensitive to the choice of the hyperparameters \cite{sutton2018reinforcement,white2016greedy}. To address this, meta-learning has been proposed as an approach for adapting the learning rates \cite{dabney2012adaptive}. However, the design of principle approaches and maintenance of low computational complexity yield difficulties to tackle the problem \cite{kearns2000bias,schapire1996worst}. Some Bayesian offline method has been proposed to address this problem \cite{downey2010bayesian}. Some methods have been proposed for online meta-learning, with high extra computational complexities that are intolerable for practical use \cite{mann2016adaptive}. Some methods seek to create replacements of TD($\lambda$) with better properties, mixing only the Monte-Carlo return and $1$-step TD return \cite{penedones2019adaptive}. To summarize, a principled method for adapting $\lambda$s online and efficiently is in need.

TD($\lambda$) with different $\lambda$ values for different states has been proposed as a more general formulation of trace-based prediction methods. While preserving good mathematical properties such as convergence to fixed points, this generalization also unlocks significantly more degrees of freedom than only adapting a constant $\lambda$ for every state.
It is intuitively clear that using state-based values of $\lambda$ provides more flexibility than using a constant for all states. 
\cite{white2016greedy} investigated the use of state-based $\lambda$s, while outperforming constant $\lambda$ values on some prediction tasks. The authors implicitly conveyed the idea that better update targets lead to better sample efficiency, \ie, update targets with smaller Mean Squared Error (MSE) lead to smaller MSE in learned values. Their proposed online adaptation is achieved via efficient incremental estimation of statistics about the return targets, gathered by some auxiliary learners. Yet, such method does not seek to improve the overall sample efficiency, because the meta-objectives does not align with the overall target quality.

The contribution of this paper is a principled method for meta-learning state- or feature-based parametric $\lambda$s \footnote{State-based for tabular case, and feature-based for function approximation case.} which aims directly at the sample efficiency. Under some assumptions, the method has the following properties:
\begin{enumerate}
\item Meta-learns online and uses only incremental computations, incurring the same computational complexity as usual value-based eligibility-trace algorithms, such as TD($\lambda$).
\item Optimizes the overall quality of the update targets.
\item Works in off-policy cases.
\item Works with function approximation.
\item Works with adaptive learning rate.
\end{enumerate}

\section{Preliminaries}\label{sec:preliminaries}
TD($\lambda$) \cite{Singh1996} uses geometric sequences of weights controlled by a parameter $\lambda \in [0,1]$ to compose a compound return as the update target, which is called the $\lambda$-return. When used online, the updates towards the $\lambda$-return can be \textit{approximated} with incremental updates using buffer vectors called the ``eligibility traces'' with linear spacetime complexity.


\begin{table*}[htbp]
\small
\centering
\caption{Notations}
\label{tab:notations}
\begin{tabular}{cm{0.7\textwidth}}
    \toprule
    \toprule
    Notation & Meaning \\
    \midrule
    $\bm{x}_{t}$     & Feature vector or observation for the state $s_t$ met at time-step $t$. \\
    $V_\pi(s_t)$ or $V(s_t)$  & Estimated value function or estimated values for $s_{t}$. \\
    $v_\pi(s_t)$, $v(s_t)$ or $\doubleE[G_t]$  & True expectation of $G_t$ for $s_{t}$, also recognized as the true value.\\
    $\bm{\lambda}$ & Enumeration vector of $\lambda$'s for all states.\\
    $G_t$ & Cumulative discounted return since time-step $t$. \\
    $\rho_t$    & Importance sampling ratio for the action $a_t$ taken at time-step $t$.\\
    $\gamma_t$  & Discount factor for returns after meeting the state $s_t$ at time-step $t$ \cite{sutton2011horde}. \\
    $d_\pi(s)$ & The frequency of meeting the state $s$ among all states, when carrying out policy $\pi$ infinitely in the environment. \\
    \bottomrule
    \bottomrule
\end{tabular}%
\end{table*}

\subsection{The Trace Adaptation Problem}
We aim to find an online meta-learning method for the adaptation of state- or feature-based $\lambda$s to achieve higher sample efficiency (faster and more accurate prediction in terms of MSE of the value estimate) in unknown environments.


\subsection{Background Knowledge}
Before everything, we first present all the notations in Table \ref{tab:notations}.

\begin{definition}[Update Target]
When an agent is conducting policy evaluation, the \textbf{update target} (or \textbf{target}) is a random variable towards whose observed value the agent updates its value estimates.
\end{definition}

Fixed-step update targets are also random variables. For example, the update target for $1$-step TD is $r_{t+1} + \gamma_{t+1} V(s_{t+1})$ and the update target for TD($\bm{\lambda}$) with state-based $\lambda$s is the (generalized) $\bm{\lambda}$-return, as defined below.

\begin{definition}[$\bm{\lambda}$-return]
The \textbf{generalized state-based $\bm{\lambda}$-return} $G_t^{\bm{\lambda}}$, where $\bm{\lambda} \equiv {[\lambda_1, \dots, \lambda_i \equiv \lambda(s_i), \dots, \lambda_{|\scriptS|}]}^T$, for one state $s_t$ in a trajectory $\tau$ is recursively defined as
\begin{equation}\nonumber
G_t^{\bm{\lambda}} \equiv G^{\bm{\lambda}}(s_t) = r_{t+1} + \gamma_{t+1} [(1 - \lambda_{t+1})V(s_{t+1}) + \lambda_{t+1}G_{t+1}^{\bm{\lambda}}]
\end{equation}
where $G_t^{\bm{\lambda}} = 0$ for $t > |\tau|$.
\end{definition}
Prediction using the generalized $\bm{\lambda}$-return has well-defined fixed points \cite{white2016greedy}. However, when using trace-based updates online, such convergence can only be achieved with the true online algorithms \cite{hasselt2014true,seijen2015true}. With the equivalence provided by the true online methods, we will also have the full control of the bias-variance tradeoff of the update targets via $\bm{\lambda}$ even if learning online.

\par

The quality of the update targets, which we aim to enhance, has important connections to the quality of the learned value function \cite{singh1997analytical}, which we ultimately pursue in policy evaluation tasks.

\begin{definition}[Overall Value Error \& State Value Error]
Given the true value $\bm{v}$ and an estimate $\bm{V}$ of target policy $\pi$, the \textbf{overall value error} for $\bm{V}$ is defined as:
\begin{equation}
\nonumber
J(\bm{V}) \equiv 1/2 \cdot {\| D^{1/2} \cdot (\bm{V} - \bm{v}) \|}_2^2
\end{equation}
where 
\begin{equation} \label{eq:Ddef}
D \equiv diag(d_\pi(s_1), d_\pi(s_2), \cdots, d_\pi(s_{|S|}))
\end{equation}
For a particular state $s$, the \textbf{state value error} is defined as
$$J(V(s)) \equiv 1/2 \cdot (V(s) - v(s))^2$$
\end{definition}

The weights favor the states that will be met with higher frequency under policy $\pi$. We often use the overall value error to evaluate the performance of value learners in prediction tasks \cite{singh1997analytical}.

\begin{definition}[Overall Target Error \& State Target Error]
Given $\bm{v}$ and the collection of the update targets $\hat{\bm{G}}$ for all states, the \textbf{overall mean squared target error} or \textbf{overall target error} for $\hat{\bm{G}}$ is defined as:
\begin{equation}
\nonumber
J(\hat{\bm{G}}) \equiv 1/2 \cdot {\| D^{1/2} \cdot (\doubleE[\hat{\bm{G}}] - \bm{v}) \|}_2^2
\end{equation}
where $D$ is defined in \eqref{eq:Ddef}.
For a particular state $s$, the \textbf{state target error} or \textbf{target error} is defined as
$$J(\hat{G}(s)) \equiv 1/2 \cdot (\hat{G}(s) - v(s))^2$$
\end{definition}

Updates are never conducted for the terminal states. Thus, the target error and value error for terminal states should be set $0$, as these states are always identifiable from the terminal signals. The errors of the values and the targets are strongly connected.

\begin{proposition}
Given suitable learning rates, value estimates using targets with lower overall target error asymptotically achieve lower overall value error.
\end{proposition}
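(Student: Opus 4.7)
The plan is to reduce the claim to a standard stochastic approximation argument. Consider the tabular policy evaluation update driven by the targets $\hat{G}$: for each visit to state $s$ at iteration $n$, apply $V_{n+1}(s) = V_n(s) + \alpha_n(s)\bigl(\hat{G}_n(s) - V_n(s)\bigr)$, leaving other entries unchanged. Viewed state-wise, this is precisely the Robbins--Monro iteration for finding the root of $f_s(x) = x - \mathbb{E}[\hat{G}(s)]$, where the noise is $\hat{G}_n(s) - \mathbb{E}[\hat{G}(s)]$.

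Under the ``suitable learning rate'' hypothesis I would make precise the usual conditions: each state is visited infinitely often, $\sum_n \alpha_n(s) = \infty$, $\sum_n \alpha_n(s)^2 < \infty$, and $\hat{G}$ has bounded variance. Then Dvoretzky's theorem (or the standard Robbins--Monro convergence result) gives $V_n(s) \to \mathbb{E}[\hat{G}(s)]$ almost surely, for every non-terminal state $s$; terminal states contribute zero error by the convention already stated in the excerpt. Stacking these componentwise, the asymptotic estimator $\bm{V}^{\infty}$ satisfies $\bm{V}^{\infty} = \mathbb{E}[\hat{\bm{G}}]$ almost surely.

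Plugging this identity into the definitions already given yields
\begin{equation}\nonumber
J(\bm{V}^{\infty}) = \tfrac{1}{2}\,\|D^{1/2}(\mathbb{E}[\hat{\bm{G}}]-\bm{v})\|_2^2 = J(\hat{\bm{G}}),
\end{equation}
so the asymptotic overall value error equals the overall target error. Consequently, if two target constructions $\hat{\bm{G}}^{(1)}$ and $\hat{\bm{G}}^{(2)}$ satisfy $J(\hat{\bm{G}}^{(1)}) \le J(\hat{\bm{G}}^{(2)})$, then the corresponding asymptotic value estimates satisfy $J(\bm{V}^{(1),\infty}) \le J(\bm{V}^{(2),\infty})$, which is the desired conclusion.

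The main obstacle, and the step where I would be careful, is making precise what ``suitable learning rates'' must guarantee. In the function-approximation or true-online trace setting the update is not a clean per-state Robbins--Monro iteration, so I would either restrict the statement to the tabular case (appealing to the fixed-point guarantees for state-based $\bm{\lambda}$-returns cited from \cite{white2016greedy,hasselt2014true,seijen2015true} to justify that the targets behave as fixed unbiased-in-expectation samples asymptotically), or explicitly add the standing assumptions that (i) the target distribution is stationary from the value learner's perspective, (ii) the step-sizes satisfy the Robbins--Monro conditions per state, and (iii) every state is visited infinitely often under $d_\pi$. With those assumptions the rest of the argument is essentially bookkeeping.
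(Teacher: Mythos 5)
The paper does not actually supply a proof of this proposition --- it only remarks that it ``can easily be proved'' and moves on --- so there is no argument of record to compare yours against; your write-up is, in effect, the missing proof. On its own terms your Robbins--Monro reduction is sound and is surely the intended argument: under per-state step-size conditions and infinite visitation, $V_n(s) \to \doubleE[\hat{G}(s)]$ for every non-terminal $s$, so the asymptotic overall value error equals the overall target error weighted by $D$, and the comparative claim follows by monotonicity. The one point that deserves more than a passing caveat is the circularity you flag in assumption (i): for the $\bm{\lambda}$-return the target $\hat{G}(s)$ is built from the current estimate $V(s_{t+1})$, so $\doubleE[\hat{\bm{G}}]$ is not a fixed vector but a function of $\bm{V}$, the limit is the fixed point of the associated operator rather than the mean of a stationary random variable, and the ``target error'' being compared must itself be evaluated at that fixed point. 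Your stationarity assumption removes this difficulty by fiat, which is defensible given how informally the proposition is stated (and is consistent with the paper's later appeal to slowly changing value functions and trust-region updates), but a fully rigorous version would either restrict to targets that do not bootstrap on $\bm{V}$ or invoke the known fixed-point results for state-based $\bm{\lambda}$-returns and compare the errors of the resulting fixed points. With that understood, the rest of your argument is correct bookkeeping.
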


Though it can easily be proved, the conclusion is very powerful: sample efficiency can be enhanced by using better update targets, which in the trace-based prediction means optimizing the difference between the update target and the true value function. This is the basis for the $\lambda$-greedy algorithm which we are about to discuss as well as our proposed method.

\par

\subsection{$\lambda$-Greedy \cite{white2016greedy}: An Existing Work}
$\lambda$-greedy is a meta-learning method that can achieve online adaptation of state-based $\lambda$s with the help of auxiliary learners that learn additional statistics about the returns. 
The idea is to minimize the error between a pseudo target $\tilde{G}(s_t)$ and the true value $\bm{v}(s_t)$, where the pseudo target is defined as:
$$\tilde{G}(s_t) \equiv \tilde{G}_t \equiv r_{t+1} + \gamma_{t+1} [(1 - \lambda_{t+1})V(s_{t+1}) + \lambda_{t+1} G_{t+1}]$$
where $\lambda_{t+1} \in [0, 1]$ and $\lambda_k = 1, \forall k \geq t + 2$. 
With this we can find that $\tilde{J}(s_t) \equiv \doubleE[{(\tilde{G}_t - \doubleE[G_t])}^2]$ is a function of only $\lambda_{t+1}$ (given the value estimate $V(s_{t+1})$). The greedy objective corresponds to minimizing the error of the pseudo target $\tilde{G}_t$:

\begin{fact}[\cite{white2016greedy}]\label{prop:greedy_minimizer}
Let $t$ be the current timestep and $s_t$ be the current state. If the agent takes action at $s_t$ \st{} it will transition into $s_{t+1}$ at $t+1$.
Given the pseudo update target $\tilde{G}_t$ of $s_t$, the minimizer $\lambda_{t+1}^*$ of the target error of the state $\tilde{J}(s_t) \equiv \doubleE[{(\tilde{G}_t - \doubleE[G_t])}^2]$ \wrt{} $\lambda_{t+1}$ is:
\begin{equation}\label{eq:white_argmin}
\lambda_{t+1}^* = \frac{(V(s_{t+1}) - \doubleE[G_{t+1}])^2}{(\doubleE[V(s_{t+1}) - G_{t+1}])^2 + Var[G_{t+1}]}
\end{equation}
where $G_{t+1}$ is the Monte Carlo return. 
\end{fact}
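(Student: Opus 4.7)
The plan is to treat the problem as a straightforward one-dimensional bias--variance decomposition. Conditional on being at $s_t$ and transitioning to $s_{t+1}$, the only randomness in $\tilde{G}_t$ (given fixed value estimates and fixed $\lambda_{t+1}$) comes from $G_{t+1}$, and the true value satisfies $\mathbb{E}[G_t] = r_{t+1} + \gamma_{t+1}\mathbb{E}[G_{t+1}]$. Subtracting, I would obtain
\begin{equation}\nonumber
\tilde{G}_t - \mathbb{E}[G_t] = \gamma_{t+1}\bigl[(1-\lambda_{t+1})(V(s_{t+1}) - \mathbb{E}[G_{t+1}]) + \lambda_{t+1}(G_{t+1} - \mathbb{E}[G_{t+1}])\bigr].
\end{equation}
This writes the error as a convex combination of a deterministic ``bias'' term $a \equiv V(s_{t+1}) - \mathbb{E}[G_{t+1}]$ and a zero-mean ``noise'' term $X \equiv G_{t+1} - \mathbb{E}[G_{t+1}]$ with $\mathbb{E}[X^2] = \mathrm{Var}[G_{t+1}]$.

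Next, I would square and take expectation. Because $\mathbb{E}[X] = 0$, the cross term vanishes and
\begin{equation}\nonumber
\tilde{J}(s_t) = \tfrac{1}{2}\gamma_{t+1}^2\bigl[(1-\lambda_{t+1})^2 a^2 + \lambda_{t+1}^2\,\mathrm{Var}[G_{t+1}]\bigr].
\end{equation}
This is a quadratic in $\lambda_{t+1}$ with strictly positive leading coefficient $\tfrac{1}{2}\gamma_{t+1}^2(a^2 + \mathrm{Var}[G_{t+1}])$, so it has a unique unconstrained minimizer. Differentiating and setting the derivative to zero gives $-a^2(1-\lambda_{t+1}) + \lambda_{t+1}\mathrm{Var}[G_{t+1}] = 0$, which rearranges to the claimed formula \eqref{eq:white_argmin}.

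Finally I would verify feasibility: since the numerator $a^2$ is nonnegative and the denominator is $a^2 + \mathrm{Var}[G_{t+1}] \geq a^2$, the minimizer automatically lies in $[0,1]$, so the box constraint $\lambda_{t+1}\in[0,1]$ is inactive and the unconstrained solution is also the constrained one. I would also note explicitly that the identity $(V(s_{t+1}) - \mathbb{E}[G_{t+1}])^2 = (\mathbb{E}[V(s_{t+1}) - G_{t+1}])^2$ holds because $V(s_{t+1})$ is deterministic given $s_{t+1}$, which reconciles the two equivalent ways of writing the bias term in \eqref{eq:white_argmin}.

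Honestly, the calculation is routine; the only thing to be careful about is the conditioning. The statement implicitly fixes $s_t$, the action, and $s_{t+1}$ (so the first step's reward and discount are constants), and treats $V(s_{t+1})$ as a given estimate while treating $G_{t+1}$ as random under the target policy. Once that is spelled out, the cross term vanishes by mean-zero-ness of $G_{t+1} - \mathbb{E}[G_{t+1}]$ and the rest is algebra; the main ``obstacle'' is simply being explicit about this conditioning so the reader sees why $\tilde{J}(s_t)$ collapses to a scalar quadratic in a single $\lambda_{t+1}$.
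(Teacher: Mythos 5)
Your proof is correct and follows essentially the same route the paper takes: the paper states this result as a cited Fact without its own proof, but its proof of the generalized version (Proposition~\ref{prop:objective}) uses exactly your bias--variance decomposition of $\doubleE[(\tilde{G}_t - \doubleE[G_t])^2]$ and recovers \eqref{eq:white_argmin} as the special case $G_{t+1}^{\bm{\lambda}} = G_{t+1}$. The only differences are cosmetic: you condition away the randomness of $r_{t+1}$ where the paper carries a $\lambda$-independent $Var[r_{t+1}]$ term (and assumes uncorrelatedness), you add the useful observation that the minimizer automatically lies in $[0,1]$, and your $\tilde{J}$ picks up a spurious factor of $1/2$ relative to the statement, which does not affect the argmin.
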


The adaptation of $\bm{\lambda}$ in $\lambda$-greedy needs auxiliary learners, that run in parallel with the value learner, for the additional distributional information needed, more specifically the expectation and the variance of the MC return, preferably in an incremental manner. The solutions for learning these have been contributed in \cite{white2016greedy,sherstan2018directly}. These methods learn the variance of $\bm{\lambda}$-return in the same way TD methods learn the value function, however with different ``rewards'' and ``discount factors'' for each state, that can be easily obtained from the known information without incurring new interactions with the environment.

$\lambda$-greedy gives strong boost for sample efficiency in some prediction tasks. However, there are two reasons that $\lambda$-greedy has much space to be improved. The first is that the pseudo target $\tilde{G}_t$ used for optimization is not actually the target used in TD($\bm{\lambda}$) algorithms: we will show that it is rather a compromise for a harder optimization problem; The second is that setting the $\lambda$s to the minimizers does not help the overall quality of the update target: the update targets for every state is controlled by the whole $\bm{\lambda}$, thus unbounded changes of $\bm{\lambda}$ for one state will inevitably affect the other states as well as the overall target error.

From the next section, we build upon the mindset provided in \cite{white2016greedy} to propose our method \algoname{}.

\section{Meta Eligibility Trace Adaptation}\label{sec:\algoname{}}
In this section, we propose our method \algoname{}, whose goal is to find an off-policy compatible way for optimizing the overall target error while keeping all the computations online and incremental. Our approach is intuitively straight-forward: optimizing overall target error via optimizing the ``true'' target error for every state, \ie, the errors of $\bm{\lambda}$-returns, properly.

We first investigate how the goal of optimizing overall target error can be achieved online. A key to solving this problem is to acknowledge that the states that the agent meets when carrying out the policy $\pi$ follows the distribution of $d_\pi$. Since the overall target error is a weighted mix of the state target errors according to $d_\pi$, this infers the possibility of decomposing the optimization of the overall target error to the optimizations of the state target errors, for which we optimize each state target error to the same extent and then the state distribution could mix our sub-optimizations together to form a joint optimization of the overall target error. We develop the following theorem to construct this process.

\begin{theorem}\label{thm:nongreedy}
Given an MDP, target and behavior policies $\pi$ and $b$, let $D$ be diagonalized state frequencies $d_\pi(s_1), \cdots, d_\pi(s_{|\scriptS|})$ and $\hat{\bm{G}} \equiv [G_{s_1}(\bm{\lambda}), \cdots, G_{s_{|\scriptS|}}(\bm{\lambda})]^T$ be the vector assembling the state update targets, in which the targets are all parameterized by a shared parameter vector $\bm{\lambda}$. The gradient of the overall target error $J(\hat{\bm{G}}, \scriptS) \equiv 1/2 \cdot \doubleE_\pi \left[ {\| D^{1/2} \cdot (\hat{\bm{G}} - \bm{v})] \|}_2^2 \right]$ can be assembled from $1$-step gradients on the target error $J(\hat{G}_s, s) \equiv 1/2 \cdot (\hat{G}_s(\bm{\lambda}) - v_s)^2$ of update target $\hat{G}_s$ for every state $s$ the agent is in when acting upon behavior policy $b$, where weights are the cumulative product $\rho_{acc}$ of importance sampling ratios from the beginning of the episode until $s$. Specifically:
$$\nabla_{\bm{\lambda}} J(\hat{\bm{G}}, \scriptS) \propto \sum_{s \sim b}{\rho_{acc} \cdot \nabla_{\bm{\lambda}} J(\hat{G}_s, s)}$$
where $b$ is the behavior policy.
\end{theorem}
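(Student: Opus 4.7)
The plan is to reduce the overall target error to a weighted sum of per-state target errors, take the gradient termwise, and then rewrite the state-distribution weighting (which is taken under the target policy $\pi$) as an expectation under the behavior policy $b$ via per-decision importance sampling. The final expression will then be a sum, over states actually visited while acting under $b$, of per-state gradients weighted by the cumulative importance ratio $\rho_{acc}$.

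First I would expand the objective using the definition of $D$:
\begin{equation}\nonumber
J(\hat{\bm{G}},\scriptS) = \tfrac{1}{2}\sum_{s\in\scriptS} d_\pi(s)\,(\hat{G}_s(\bm{\lambda}) - v_s)^2 = \sum_{s\in\scriptS} d_\pi(s)\,J(\hat{G}_s,s).
\end{equation}
Because $\bm{v}$ does not depend on $\bm{\lambda}$ and (by assumption) $d_\pi$ does not either, the gradient distributes as
\begin{equation}\nonumber
\nabla_{\bm{\lambda}} J(\hat{\bm{G}},\scriptS) = \sum_{s\in\scriptS} d_\pi(s)\,\nabla_{\bm{\lambda}} J(\hat{G}_s,s).
\end{equation}

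Next I would convert the $d_\pi$-weighted sum into something an off-policy agent can actually sample. Writing $d_\pi(s)$ as (proportional to) $\sum_t \mathbb{P}_\pi(s_t = s)$ over an episode, and then changing measure from $\pi$ to $b$ along each prefix $s_0,a_0,\dots,a_{t-1},s_t$, the Radon--Nikodym derivative is exactly the per-decision product
\begin{equation}\nonumber
\rho_{acc} \;\equiv\; \prod_{k=0}^{t-1} \rho_k \;=\; \prod_{k=0}^{t-1} \frac{\pi(a_k\mid s_k)}{b(a_k\mid s_k)},
\end{equation}
so that $\mathbb{E}_\pi[f(s_t)] = \mathbb{E}_b[\rho_{acc}\,f(s_t)]$ for any $f$. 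Summing over $t$ and applying this to $f(s) = \nabla_{\bm{\lambda}} J(\hat{G}_s,s)$ yields
\begin{equation}\nonumber
\sum_{s\in\scriptS} d_\pi(s)\,\nabla_{\bm{\lambda}} J(\hat{G}_s,s) \;\propto\; \mathbb{E}_b\!\left[\sum_{t}\rho_{acc}\,\nabla_{\bm{\lambda}} J(\hat{G}_{s_t},s_t)\right],
\end{equation}
which is precisely the claimed form $\sum_{s\sim b}\rho_{acc}\,\nabla_{\bm{\lambda}} J(\hat{G}_s,s)$, up to the normalization hidden in the proportionality. Since we only need a gradient \emph{direction} for stochastic updates, that constant is harmless.

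The step I expect to be the main obstacle is the change-of-measure argument: being careful that $\rho_{acc}$ is the right cumulative ratio (from episode start up to, but not including, the action taken \emph{at} $s$), and that $\hat{G}_s(\bm{\lambda})$ itself does not need to be reweighted again because its own off-policy corrections are already baked into the trace-based target (as in standard off-policy TD$(\bm{\lambda})$). Once that bookkeeping is pinned down, combining the per-state gradient decomposition with the per-decision importance sampling identity gives the result, and it then justifies that an online update that, upon visiting $s_t$ under $b$, descends $\rho_{acc}\,\nabla_{\bm{\lambda}} J(\hat{G}_{s_t},s_t)$ is a stochastic gradient step on the overall target error $J(\hat{\bm{G}},\scriptS)$.
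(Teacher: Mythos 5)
Your proposal is correct and follows essentially the same route as the paper: decompose the overall target error as a $d_\pi$-weighted sum of per-state errors, push the gradient through, expand $d_\pi(s)$ as a sum over time of visitation probabilities, and convert the $\pi$-expectation to a $b$-expectation via the cumulative per-decision importance ratio $\rho_{acc}$. The only difference is presentational — you invoke the change-of-measure identity directly where the paper writes out the trajectory probabilities term by term — and your closing caveat about not double-correcting $\hat{G}_s(\bm{\lambda})$ matches an assumption the paper also makes implicitly.
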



\begin{proof}
According to the definition of overall target error,
\begin{equation}
\begin{aligned}
J({\bm{{\bm{\lambda}}}}) \equiv \sum_{s \in \scriptS}{d_\pi(s) \cdot J_s(\bm{\lambda}))} = \sum_{s \in \scriptS}{d_\pi(s) \cdot \doubleE[G^{\bm{{\bm{\lambda}}}}(s) - v(s)]^2}
\end{aligned}\nonumber
\end{equation}
If we take the gradient \wrt{} ${\bm{\lambda}}^{(t+1)}$ we can see that:

\begin{equation}
\small
\begin{aligned}
& \nabla J(\hat{\bm{G}}, \scriptS) = \sum_{s \in \scriptS}{d_\pi(s) \cdot \nabla J(\hat{G}_s, s)}\\
& \text{push the gradient inside}\\
& = \sum_{s \in \scriptS}{\sum_{k=0}^{\infty}{\doubleP\{s_0 \to s, k, \pi, s_0 \sim d(s_0) \} \cdot \nabla J(\hat{G}_s, s)}}\\
& \text{$\doubleP\{\cdots\}$ is the prob. of $s_0 \to \cdots \to s$ in $k$ steps,}\\
& \text{$s_0$ is sampled from the starting distribution $d(s_0)$.}\\
& = \sum_{s \in \scriptS}{\sum_{k=0}^{\infty}{\sum_{\tau}{\doubleP\{s_0 \xrightarrow{\tau} s, k, \pi, s_0 \sim d(s_0) \} \cdot \nabla J(\hat{G}_s, s)}}}\\
& \text{$\tau=s_0, a_0, s_1, a_1,\dots,a_{k-1},s_k=s$ is a trajectory starting from $s_0$,}\\
& \text{following $\pi$ and transitioning to $s$ in $k$ steps.}\\
& = \sum_{s \in \scriptS}{\sum_{k=0}^{\infty}{\sum_{\tau}{d(s_0)\cdots p(\tau_{k-1}, a_{k-1}, s)\pi(a_{k-1}|\tau_{k-1}) \cdot \nabla J(\hat{G}_s, s)}}}\\
& \text{$\tau_i$ is the $i+1$-th state of $\tau$ and $p(s, a, s^{'})$ is the prob. of $s \xrightarrow{a} s^{'}$ in the MDP}\\
& = \sum_{s \in \scriptS}{\sum_{k=0}^{\infty}{\sum_{\tau}{d(s_0) \cdots p(\tau_{k-1}, a_{k-1}, s) \frac{\pi(a_{k-1}|\tau_{k-1})}{b(a_{k-1}|\tau_{k-1})} b(a_{k-1}|\tau_{k-1}) \cdot \nabla J(\hat{G}_s, s)}}}\\
& \text{for the convenience of injecting importance sampling ratios}\\
& = \sum_{s \in \scriptS}{\sum_{k=0}^{\infty}{\sum_{\tau}{d(s_0) \cdots p(\tau_{k-1}, a_{k-1}, s) \rho_{k-1} b(a_{k-1}|\tau_{k-1}) \cdot \nabla_{\bm{\lambda}}J_s(\bm{\lambda}))}}}\\
& \text{$\rho_{i} \equiv \frac{\pi(a_i | \tau_i)}{b(a_i | \tau_i)}$ is the importance sampling (IS) ratio}\\
& = \sum_{s \in \scriptS}{\sum_{k=0}^{\infty}{\sum_{\tau}{\rho_{0:k-1} \cdot d(s_0) \cdots p(\tau_{k-1}, a_{k-1}, s) b(a_{k-1}|\tau_{k-1}) \cdot \nabla J(\hat{G}_s, s)}}}\\
& \text{$\rho_{0:i} \equiv \prod_{v=0}^{i}{\rho_{v}}$ is the product of IS ratios of $\tau$ from $\tau_0$ to $\tau_i$}\\
& = {\sum_{k=0}^{\infty}{\sum_{\tau} \rho_{0:k-1} \sum_{s \in \scriptS} { \cdot d(s_0) \cdots p(\tau_{k-1}, a_{k-1}, s) b(a_{k-1}|\tau_{k-1}) \cdot \nabla J(\hat{G}_s, s)}}} \\
& = \doubleE{_{\tau \sim b}\left[] \rho_{\tau} \nabla J(\hat{G}_s, s)\right]} \approx \sum_{s \sim b}{\rho_{acc} \cdot \nabla J(\hat{G}_s, s)}\\
& \text{equivalent to summing over the experienced states under $b$}
\end{aligned}\nonumber\normalsize
\end{equation}
\end{proof}

The on-policy case is easily proved with $b=\pi$. The theorem applies for general parametric update targets including $\bm{\lambda}$-return. Optimizing $\bm{\lambda}$ for each state will inevitably affect the other states, \ie, decreasing target error for one state may increase the others. The theorem shows if we can do gradient descent on the target error of the states according to $\bm{d}_\pi$, we can achieve optimization on the overall target error, assuming the value function is changing slowly. The problem left for us is to find a way to calculate or approximate the gradients of $\bm{\lambda}$ for the state target errors.

\begin{figure*}
\centering
\includegraphics[width=0.8\textwidth]{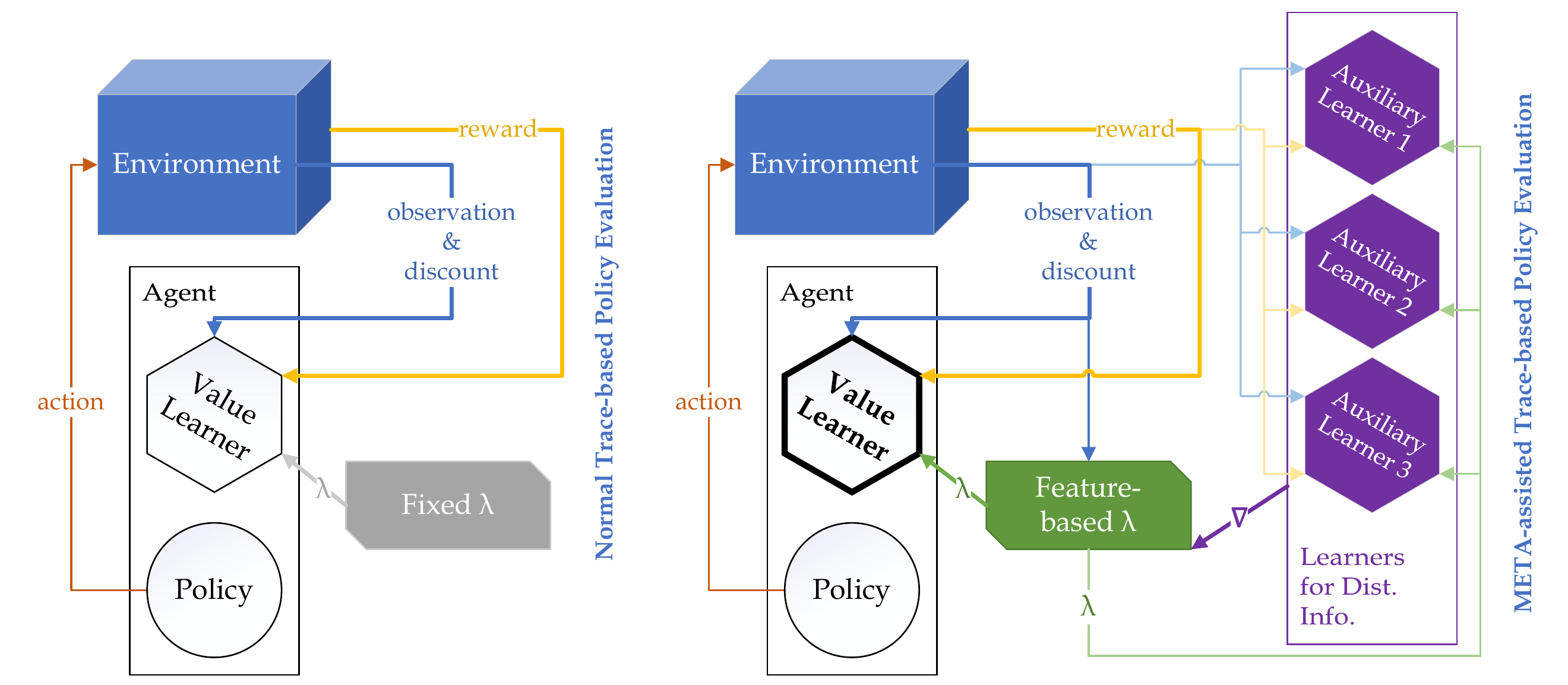}
\caption{Mechanisms for \algoname{}-assisted trace-based policy evaluation: the auxiliary learners learn the distributional information in parallel to the value learner and provide the approximated gradient for the adjustments of $\bm{\lambda}$.}
\label{fig:\algoname{}}
\end{figure*}

The exact computation of this gradient is infeasible in the online setting: in the state-based $\lambda$ setting, the $\bm{\lambda}$-return for every state is interdependent on every $\lambda$ of every state. These states are unknown before observation. However, we propose a method to estimate this gradient by estimating the partial derivatives in the dimensions of the gradient vector, which are further estimated online using auxiliary learners that estimates the distributional information of the update targets. The method can be interpreted as optimizing a bias-variance tradeoff.

\begin{proposition}\label{prop:objective}
Let $t$ be the current timestep and $s_t$ be the current state. The agent takes action $a_t$ at $s_t$ and will transition into $s_{t+1}$ at $t+1$ while receiving reward $r_{t+1}$. Suppose that $r_{t+1}$ and $G_{t+1}^{\bm{\lambda}}$ are uncorrelated, given the update target $G_t^{\bm{{\bm{\lambda}}}}$ for state $s_t$, the (semi)-partial derivative of the target error $J_{s_t}(\bm{\lambda}) \equiv \doubleE[(G_t^{\bm{{\bm{\lambda}}}} - \doubleE[G_t])^2]$  of the state $s_t$ \wrt{} $\lambda_{t+1} \equiv \lambda(s_{t+1})$ is:
\begin{equation}
    \begin{aligned}
    \frac{\partial J_{s_t}(\bm{\lambda})}{\partial \lambda_{t+1}} = & \gamma_{t+1}^2 [\lambda_{t+1} \left[ (V(s_{t+1}) - \doubleE[G_{t+1}^{\bm{{\bm{\lambda}}}}])^2 + Var[G_{t+1}^{\bm{{\bm{\lambda}}}}] \right]\\
    & + (\doubleE[G_{t+1}^{\bm{{\bm{\lambda}}}}] - V(s_{t+1}))
    (\doubleE[G_{t+1}] - V(s_{t+1}))]
    \end{aligned}\nonumber
\end{equation}
And its minimizer \wrt{} $\lambda_{t+1}$ is:
\begin{equation}
\begin{aligned}
& \argmin_{\lambda_{t+1}}{J_{s_t}(\bm{\lambda})} = \frac{
(V(s_{t+1}) - \doubleE[G_{t+1}^{\bm{{\bm{\lambda}}}}])
    (V(s_{t+1}) - \doubleE[G_{t+1}])
}
{(V(s_{t+1}) - \doubleE[G_{t+1}^{\bm{{\bm{\lambda}}}}])^2 + Var[G_{t+1}^{\bm{{\bm{\lambda}}}}]}
\end{aligned}\nonumber
\end{equation}
\end{proposition}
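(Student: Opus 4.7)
The plan is to use the one-step recursion of $G_t^{\bm{\lambda}}$ to write $G_t^{\bm{\lambda}} - v(s_t)$ as an affine function of $\lambda_{t+1}$, so that $J_{s_t}(\bm{\lambda})$ becomes a quadratic in $\lambda_{t+1}$ whose unique minimizer can be read off after differentiation. The ``semi'' in semi-partial means treating $G_{t+1}^{\bm{\lambda}}$ as independent of $\lambda_{t+1}$: this is exact unless $s_{t+1}$ is revisited inside $G_{t+1}^{\bm{\lambda}}$, and is in any case the natural notion of a local, one-step update.

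Concretely, I would expand
\[
G_t^{\bm{\lambda}} - v(s_t) = \bigl(r_{t+1}+\gamma_{t+1}V(s_{t+1})-v(s_t)\bigr) + \gamma_{t+1}\lambda_{t+1}\bigl(G_{t+1}^{\bm{\lambda}}-V(s_{t+1})\bigr),
\]
condition on the observed $s_{t+1}$, and use the Bellman identity $v(s_t)=\mathbb{E}[r_{t+1}\mid s_{t+1}]+\gamma_{t+1}\mathbb{E}[G_{t+1}]$ to rewrite the first parenthesis as a mean-zero reward-noise term plus $\gamma_{t+1}(V(s_{t+1})-\mathbb{E}[G_{t+1}])$. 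Squaring and taking expectation, the uncorrelation hypothesis kills the cross term between the reward noise and $G_{t+1}^{\bm{\lambda}}$, and the identity $\mathbb{E}[Z^2]=(\mathbb{E}Z)^2+\mathrm{Var}(Z)$ applied to $Z=G_{t+1}^{\bm{\lambda}}-V(s_{t+1})$ converts the $\lambda_{t+1}^2$ coefficient into $\gamma_{t+1}^2\bigl[(V(s_{t+1})-\mathbb{E}[G_{t+1}^{\bm{\lambda}}])^2+\mathrm{Var}[G_{t+1}^{\bm{\lambda}}]\bigr]$, matching the quadratic term in the stated derivative. The linear-in-$\lambda_{t+1}$ coefficient is the product $\gamma_{t+1}(V(s_{t+1})-\mathbb{E}[G_{t+1}])\cdot\gamma_{t+1}(\mathbb{E}[G_{t+1}^{\bm{\lambda}}]-V(s_{t+1}))$, as claimed. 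Because the $\lambda_{t+1}^2$ coefficient is a variance plus a square and hence nonnegative, $J_{s_t}$ is convex in $\lambda_{t+1}$, so setting the derivative to zero and dividing yields the unique global minimizer displayed in the proposition.

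The main obstacle, I expect, is the principled handling of the substitution $v(s_t)\to\mathbb{E}[r_{t+1}\mid s_{t+1}]+\gamma_{t+1}v(s_{t+1})$. Unconditionally $v(s_t)$ averages over all successor states, not only the observed one, so the equality is a conditional statement requiring a careful tower-rule decomposition $J_{s_t}(\bm{\lambda})=\mathbb{E}_{s_{t+1}}[\mathbb{E}[(G_t^{\bm{\lambda}}-v(s_t))^2\mid s_{t+1}]]$, after which $\partial/\partial\lambda(s_{t+1})$ is pushed inside and picks out only the term whose next-state $\lambda$ equals $\lambda_{t+1}$. Any $\lambda_{t+1}$-independent residuals, notably the reward variance and the term $\mathbb{E}[(r_{t+1}+\gamma_{t+1}V(s_{t+1})-v(s_t))^2\mid s_{t+1}]$, drop out under differentiation, leaving exactly the expressions stated in the proposition.
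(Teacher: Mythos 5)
Your proposal is correct and follows essentially the same route as the paper: use the one-step recursion to make $J_{s_t}$ a quadratic in $\lambda_{t+1}$ (holding the successor statistics fixed, per the ``semi'' convention), invoke the uncorrelated-reward assumption to kill the cross term, and set the derivative to zero, with convexity guaranteeing a global minimizer. The only organizational difference is that the paper first splits $J_{s_t}=\doubleE^2[G_t^{\bm{\lambda}}-G_t]+Var[G_t^{\bm{\lambda}}]$, so that $r_{t+1}$ cancels inside the bias term and the conditional Bellman-identity step you flag as the main obstacle never arises; your direct expansion of $(G_t^{\bm{\lambda}}-v(s_t))^2$ is equivalent but forces you to handle that conditioning explicitly. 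One detail worth noting: your linear coefficient $\gamma_{t+1}^2\,(V(s_{t+1})-\doubleE[G_{t+1}])(\doubleE[G_{t+1}^{\bm{\lambda}}]-V(s_{t+1}))$ is the \emph{negative} of the corresponding term displayed in the proposition's derivative, and it is your sign that is consistent with the stated minimizer (and with the $\lambda$-greedy special case $G_{t+1}^{\bm{\lambda}}=G_{t+1}$), so the discrepancy is a sign slip in the displayed derivative rather than an error in your computation.
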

\begin{proof}
\begin{equation}
\begin{aligned}
J_{s_t}(\bm{\lambda}) \equiv \doubleE[(G_t^{\bm{{\bm{\lambda}}}} - \doubleE[G_t])^2] & = \doubleE^2[G_t^{\bm{{\bm{\lambda}}}} - G_t] + Var[G_t^{\bm{\lambda}}]
\end{aligned}
\nonumber
\end{equation}

\begin{equation}
\begin{aligned}
& \doubleE[G_t^{\bm{{\bm{\lambda}}}} - G_t]\\
& = \doubleE[r_{t+1}+\gamma_{t+1}((1 - \lambda_{t+1})V(s_{t+1})+\lambda_{t+1}G_{t+1}^{\bm{\lambda}})\\
& -(r_{t+1}+\gamma_{t+1}G_{t+1})]\\
& = \gamma_{t+1} (1 -\lambda_{t+1})V(s_{t+1}) + \gamma_{t+1} \lambda_{t+1} \doubleE[G_{t+1}^{\bm{\lambda}}] - \gamma_{t+1} \doubleE[G_{t+1}]
\end{aligned}
\nonumber
\end{equation}

\begin{equation}
\begin{aligned}
& Var[G_t^{\bm{\lambda}}] = Var[r_{t+1} + \gamma_{t+1} [(1 - \lambda_{t+1})V(s_{t+1}) + \lambda_{t+1}G_{t+1}^{\bm{\lambda}}]]\\
& = Var[r_{t+1}] + \gamma_{t+1}^2 \lambda_{t+1}^2Var[G_{t+1}^{\bm{\lambda}}]\\
& \text{(assuming $r_{t+1}$ \& $G_{t+1}^{\bm{{\bm{\lambda}}}}$ uncorrelated)}
\end{aligned}
\nonumber
\end{equation}

Assuming negligible effects of $\lambda_{t+1}$ on the statistics, \ie{} not taking the partial derivatives of the expectation or the variance, we can obtain the semi-partial derivative 
\begin{equation}
\begin{aligned}
\frac{\partial J_{s_t}(\bm{\lambda})}{\partial \lambda_{t+1}} & \equiv \frac{\partial}{\partial \lambda_{t+1}} \left(\doubleE[(G_t^{\bm{{\bm{\lambda}}}} - \doubleE[G_t])^2] \right)\\
& = \frac{\partial}{\partial \lambda_{t+1}} ((\gamma_{t+1} (1 - \lambda_{t+1})V(s_{t+1})\\
\end{aligned}
\nonumber
\end{equation}
\begin{equation}
\begin{aligned}
& + \gamma_{t+1} \lambda_{t+1} \doubleE[G_{t+1}^{\bm{\lambda}}] - \gamma_{t+1} \doubleE[G_{t+1}])^2\\
& + Var[r_{t+1}]+ \gamma_{t+1}^2 \lambda_{t+1}^2Var[G_{t+1}^{\bm{\lambda}}])\\
& = \gamma_{t+1}^2 [\lambda_{t+1} \left( (V(s_{t+1}) - \doubleE[G_{t+1}^{\bm{{\bm{\lambda}}}}])^2 + Var[G_{t+1}^{\bm{{\bm{\lambda}}}}] \right)\\
& + (\doubleE[G_{t+1}^{\bm{{\bm{\lambda}}}}] - V(s_{t+1}))(\doubleE[G_{t+1}] - V(s_{t+1}))]
\end{aligned}
\nonumber
\end{equation}

The minimizer is achieved by setting the partial derivative $0$.
\end{proof}

This proposition constructs a way to estimate the partial derivative that corresponds to the dimension of $\lambda_{t+1}$ in $\nabla \bm{\lambda}$, if we know or can effectively estimate the statistics of $\doubleE[G_{t+1}]$, $\doubleE[G_{t+1}^{\bm{\lambda}}]$ and $Var[G_{t+1}^{\bm{\lambda}}]$. This proposition also provides the way for finding a whole series of partial derivatives and also naturally yields a multi-step method of approximating the full gradient $\nabla_{\bm{\lambda}} \doubleE[(G_t^{\bm{{\bm{\lambda}}}} - \doubleE[G_t])^2]$. The partial derivative in the proposition is achieved by looking $1$-step into the future. We can also look more steps ahead, and get the partial derivatives \wrt{} $\lambda^{(t+2)}, \cdots$. These partial derivatives can be computed with the help of the auxiliary tasks as well. The more we assemble the partial derivatives, the closer we get to the full gradient. However, in our opinion, $1$-step is still the most preferred not only because it can be obtained online every step without the need of buffers but also for its dominance over other dimensions of $\bm{\lambda}$: the more steps we look into the future, the more the corresponding $\lambda$s of the states are discounted by the earlier $\gamma$s and $\lambda$s. This also enables the computation of the whole gradient if we were to do the adaptations offline, in which case everything would be more precise and easier, though more computationally costly.

It is interesting to observe that the minimizer is a generalization of (\ref{eq:white_argmin}): the minimizer of the greedy target error can be achieved by setting $G_{t+1}^{\bm{\lambda}} = G_{t+1}$. In practice, given an unknown MDP, the distributional information of the targets, \eg{} $\doubleE[G_{t+1}]$, $\doubleE[G_{t+1}^{\bm{\lambda}}]$ and $Var[G_{t+1}^{\bm{\lambda}}]$, can only be estimated. However, such estimation has been proved viable in both offline and online settings of TD($\lambda$) and the variants, using supervised learning and auxiliary tasks using the direct VTD method \cite{sherstan2018directly}, respectively. This means the optimization for the ``true'' target error is as viable as the $\lambda$-greedy method proposed in \cite{white2016greedy}, while it requires more complicated estimations than that for the ``greedy'' target error: we need the estimates of $\doubleE[G_{t+1}]$, $\doubleE[G_{t+1}^{\bm{\lambda}}]$ and $Var[G_{t+1}^{\bm{\lambda}}]$, while for (\ref{eq:white_argmin}) we only need the estimation of $\doubleE[G_{t+1}]$ and $Var[G_{t+1}]$.

The optimization of the true state target error, \ie{} the MSE between $\bm{\lambda}$-return and the true value, together with the auxiliary estimation, brings new challenges: the auxiliary estimates are learnt online and requires the stationarity of the update targets. This means if a $\lambda$ for one state is changed dramatically, the auxiliary estimates of $\doubleE[G_{t+1}^{\bm{\lambda}}]$ and $Var[G_{t+1}^{\bm{\lambda}}]$ will be destroyed, since they depend on each element in ${\bm{\lambda}}$ (whereas in $\lambda$-greedy, the pseudo targets require no $\bm{\lambda}$-controlled distributional information). If we cannot handle such challenge, either we end up with a method that have to wait for some time after some change of $\bm{\lambda}$ or we end up with $\lambda$-greedy, bearing the high bias towards the MC return and disconnection from the overall target error.

Adjusting ${\bm{\lambda}}$ without destroying the auxiliary estimates is a core problem. We tackle such optimization by noticing that the expectation and variance of the update targets are continuous and differentiable \wrt{} $\bm{\lambda}$. Thus, a small change on $\lambda_{t+1}$ only yields a bounded shift of the estimates of the auxiliary tasks.
If we use small enough steps of the estimated gradients to change $\bm{\lambda}$, we can stabilize the auxiliary estimates since they will not deviate far and will be corrected by the TD updates quickly. This method inherits the ideas of trust region methods used in optimizing the dynamic systems.

Combining the approximation of gradient and the decomposed $1$-step optimization method, we now have an online method to optimize $\bm{\lambda}$ to achieve approximate optimization of the overall target error, which we name \algoname{}. This method can be jointly used with value learning, serving as a plugin, to adapt $\bm{\lambda}$ in real-time. Before we present the whole algorithm, we would like to first discuss the properties, potentials as well as limitations of \algoname{}.




\begin{algorithm*}[htbp]
\caption{\algoname{}-assisted Online Policy Evaluation}
\label{alg:MTA_PE}
Initialize weights for the value learner and those for the auxiliary learners that learns $\hat{\doubleE}[G_t]$, $\hat{\doubleE}[G_t^{\bm{\lambda}}]$ and $\hat{V}ar[G_t^{\bm{\lambda}}]$\\
\For{episodes}{
    $\rho_{acc} = 1$; \textcolor{darkgreen}{//initialize cumulative product of importance sampling ratios}\\
    Set traces for value learner and auxiliary learners to be $\bm{0}$;\\
    $\bm{x}_0 = \text{initialize}(\scriptE{})$; \textcolor{darkgreen}{//Initialize the environment $\scriptE{}$ and get the initial feature (observation) $\bm{x}_0$}\\
    
    \While{$t \in \{0, 1, \dots\}$ until terminated}{
    \textcolor{darkgreen}{//INTERACT WITH ENVIRONMENT}\\
    $a_t \sim b(\bm{x}_{t})$; \textcolor{darkgreen}{//sample $a_t$ from behavior policy $b$}\\
    $\rho_t = {\pi(a_{t}, \bm{x}_{t})} / {b(a_{t}, \bm{x}_{t})}$; $\rho_{acc} = \rho_{acc} \cdot \rho_t$; \textcolor{darkgreen}{//get and accumulate importance sampling ratios}\\
    $\bm{x}_{t+1}, \gamma_{t+1} = \text{step}(a_t)$;
    \textcolor{darkgreen}{//take action $a_t$, get feature (observation) $\bm{x}_{t+1}$ and discount factor $\gamma_{t+1}$}\\
    \textcolor{purple}{//AUXILIARY TASKS}\\
    learn $\hat{\doubleE}[G_t]$, $\hat{\doubleE}[G_t^{\bm{\lambda}}]$ and $\hat{V}ar[G_t^{\bm{\lambda}}]$; \textcolor{purple}{//using direct VTD \cite{sherstan2018directly} with trace-based TD methods, \eg, true online GTD($\bm{\lambda}$) \cite{hasselt2014true}}\\
    
    \textcolor{purple}{//APPROXIMATE SGD ON OVERALL TARGET ERROR}\\
    
    $\lambda_{t+1} = \lambda_{t+1} - \kappa \gamma_{t+1}^2 \rho_{acc} \left[\lambda_{t+1} \left( (V(\bm{x}_{t+1}) - \hat{\doubleE}[G_{t+1}^{\bm{\lambda}}])^2 + \hat{V}ar[G_{t+1}^{\bm{\lambda}}] \right) + (\hat{\doubleE}[G_{t+1}^{\bm{\lambda}}] - V(\bm{x}_{t+1}))(\hat{\doubleE}[G_{t+1}] - V(\bm{x}_{t+1}))\right]$; \textcolor{purple}{// change $\lambda_{t+2}, \cdots$ when using multi-step approximation of the gradient}\\
    
    \textcolor{darkgreen}{//LEARN VALUE}\\
    learn $V(\bm{x}_{t})$ using a trace-based TD method;
}
}
\end{algorithm*}

\section{Discussions and Insights}

\subsection{Hyperparameter Search}

\algoname{} trades the search for $\lambda$ with $\kappa$, the step size of \algoname{}-optimization. However, $\kappa$ gives the algorithm the ability to have state-based $\lambda$s: state or feature (observation) based $\bm{\lambda}$ can lead to better convergence compared to fixing $\lambda$ for all states. Such potential may never be achieved by searching a fixed $\lambda$. Let us consider the tabular case, where the search for constant $\bm{\lambda} = {\lambda} \bm{1}$ is equivalent to searching along the diagonal direction inside a $|\scriptS|$-dimensional unit box ${[0, 1]}^{|\scriptS|}$. By replacing $\lambda$ with $\kappa$, we extend the search direction of $\bm{\lambda}$ into the whole unit box. The new degrees of freedom are crucial to the performance.

\subsection{Reliance on Auxiliary Tasks}\label{subsection:reliance}

\algoname{} updates assume that $\hat{\doubleE}[G_t]$, $\hat{\doubleE}[G_t^{\bm{\lambda}}]$ and $\hat{V}ar[G_t^{\bm{\lambda}}]$ can be well estimated by the auxiliary tasks. This is very similar to the idea of actor changing the policy upon the estimation of the values of the critic in the actor-critic methods. To implement this, we can add a buffer period for the estimates to be stable before doing any adaptation; Additionally, we should set the learning rates of the auxiliary learners higher than the value learner \st{} the auxiliary tasks are learnt faster, resembling the guidelines for setting learning rates of actor-critic. 
With the buffer period, we can also view \algoname{} as approximately equivalent to offline hyperparameter search of $\bm{\lambda}$, where with \algoname{} we first reach a relatively stable accuracy and then adjust $\bm{\lambda}$ to slowly slide to fixed points with lower errors. Also, \algoname{} is compatible with fancier settings of learning rate, since the meta-adaptation is independent of its values.

\subsection{Function Approximation}
With function approximation, the meta-learning of $\bm{\lambda}$-greedy cannot make use of the state features directly but through the bottlenecks of the estimates. Whereas in \algoname{}, $\bm{\lambda}$ can be parameterized and optimized with gradient descent. This enables better generalization and can be effective when the state features contain rich information (good potential to be used with deep neural networks). This is to be demonstrated in the experiments.





\subsection{From Prediction to Control}

Within the control tasks where the quality of prediction is crucial to the policy improvement, it is viable to apply \algoname{} to enhance the policy evaluation process. \algoname{} is a trust region method, which requires the policy to be also changing smoothly, \st{} the shift of values can be bounded. This constraint leads us naturally to the actor-critic architectures, where the value estimates can be used to improve a continuously changed parametric policy. We provide the pseudocode of \algoname{}-assisted actor-critic control in Algorithm \ref{alg:MTA_AC}.

\begin{algorithm*}[htbp]
\caption{\algoname{}-assisted Online Actor-Critic}
\label{alg:MTA_AC}
Initialize weights for the value learner and those for the auxiliary learners that learns $\hat{\doubleE}[G_t]$, $\hat{\doubleE}[G_t^{\bm{\lambda}}]$ and $\hat{V}ar[G_t^{\bm{\lambda}}]$\\
Initialize parameterized policies $\pi(\cdot | \theta_\pi)$ and $b(\cdot | \theta_b)$;\\
\For{episodes}{
    Set traces for value learner and auxiliary learners to be $\bm{0}$;\\
    $\bm{x}_0 = \text{initialize}(\scriptE{})$;\\
    
    \While{$t \in \{0, 1, \dots\}$ until terminated}{
    $a_t \sim b(\bm{x}_{t})$; $\rho_t = {\pi(a_{t}, \bm{x}_{t})} / {b(a_{t}, \bm{x}_{t})}$; $\rho_{acc} = \rho_{acc} \cdot \rho_t$;\\
    $\bm{x}_{t+1}, \gamma_{t+1} = \text{step}(a_t)$;\\
    \textcolor{purple}{//AUXILIARY TASKS and SGD ON OVERALL TARGET ERROR}\\
    learn $\hat{\doubleE}[G_t]$, $\hat{\doubleE}[G_t^{\bm{\lambda}}]$ and $\hat{V}ar[G_t^{\bm{\lambda}}]$;\\
    $\lambda_{t+1} = \lambda_{t+1} - \kappa \gamma_{t+1}^2 \rho_{acc} \left[\lambda_{t+1} \left( (V(\bm{x}_{t+1}) - \doubleE[G_{t+1}^{\bm{\lambda}}])^2 + Var[G_{t+1}^{\bm{\lambda}}] \right) + (\doubleE[G_{t+1}^{\bm{\lambda}}] - V(\bm{x}_{t+1}))(\doubleE[G_{t+1}] - V(\bm{x}_{t+1}))\right]$;\\
    
    learn $V(\bm{x}_{t})$ using a trace-based TD method;\\
    
    \textcolor{red}{//LEARN POLICY}\\
    One (small) step of policy gradient (actor-critic) on $\theta_\pi$;\\
}
}
\end{algorithm*}


\subsection{Overview \& Limitations}
\algoname{} can be injected as a plugin for accelerating (improving the sample efficiency of) TD-based policy evaluation processes. This is illustrated in Figure \ref{fig:\algoname{}}: by adding $3$ auxiliary learners to the system, better feature-based $\bm{\lambda}$ can be achieved using only the existing information. In Algorithm \ref{alg:MTA_PE}, \algoname{} is injected to a TD-based baseline as the additional two lines that are with purple comments. The first is for the $3$ auxiliary learners estimating the statistics with trace-based online updates, using either VTD \cite{white2016greedy} or DVTD \cite{sherstan2018directly} methods. The second is the $1$-step update approximating the $1$-step gradient descent. The injected process uses additional computational costs approximately $3$ times that of the baseline yet incurring no higher order complexities. In Algorithm \ref{alg:MTA_AC}, \algoname{} is injected to assist the critic update for value estimation with the same mechanisms.

Meaningful as it is, \algoname{} has its limitations. First, though the trust region optimization enabled the optimization of joint error, it also brought trouble: the adaptation is bound to be slow. In prediction, given the changing $\bm{V}$, the ``optimal'' $\bm{\lambda}$ also changes, presumably fast. Therefore, \algoname{} may not able to catch up with the need for fast adaptation, even if it is always chasing the ``optimal'' $\bm{\lambda}$; Second, being a gradient method, the stepsize parameter $\kappa$ is inevitably sensitive to the feature structures. For example, if the features are large in norm then $\kappa$ must be set tiny; Third, when used with actor-critic control, it further requires that the policy to be changing slowly. We will leave these problems for future research.

\section{Experiments}\label{sec:experiments}


We examine the empirical behavior of \algoname{} by comparing it to the baselines true online TD($\lambda$) \cite{seijen2015true} or true online GTD($\lambda$) \cite{hasselt2014true} as well as the $\lambda$-greedy method \cite{white2016greedy}\footnote{Source code is available at: \url{https://github.com/PwnerHarry/META}}. For all sets of tests, $\bm{\lambda}$ start adapting from $\bm{1}$\footnote{Such setting is enabled by using $\lambda(\bm{x}) = 1 - \bm{w}_\lambda^T\bm{x}$ as the function approximator of the parametric $\lambda$ and the weights initialized as $\bm{0}$.}, which is the same as $\lambda$-greedy \cite{white2016greedy}.

\begin{figure*}
\centering

\subfloat[RingWorld, $\gamma = 0.95$]{
\captionsetup{justification = centering}
\includegraphics[width=0.3\textwidth]{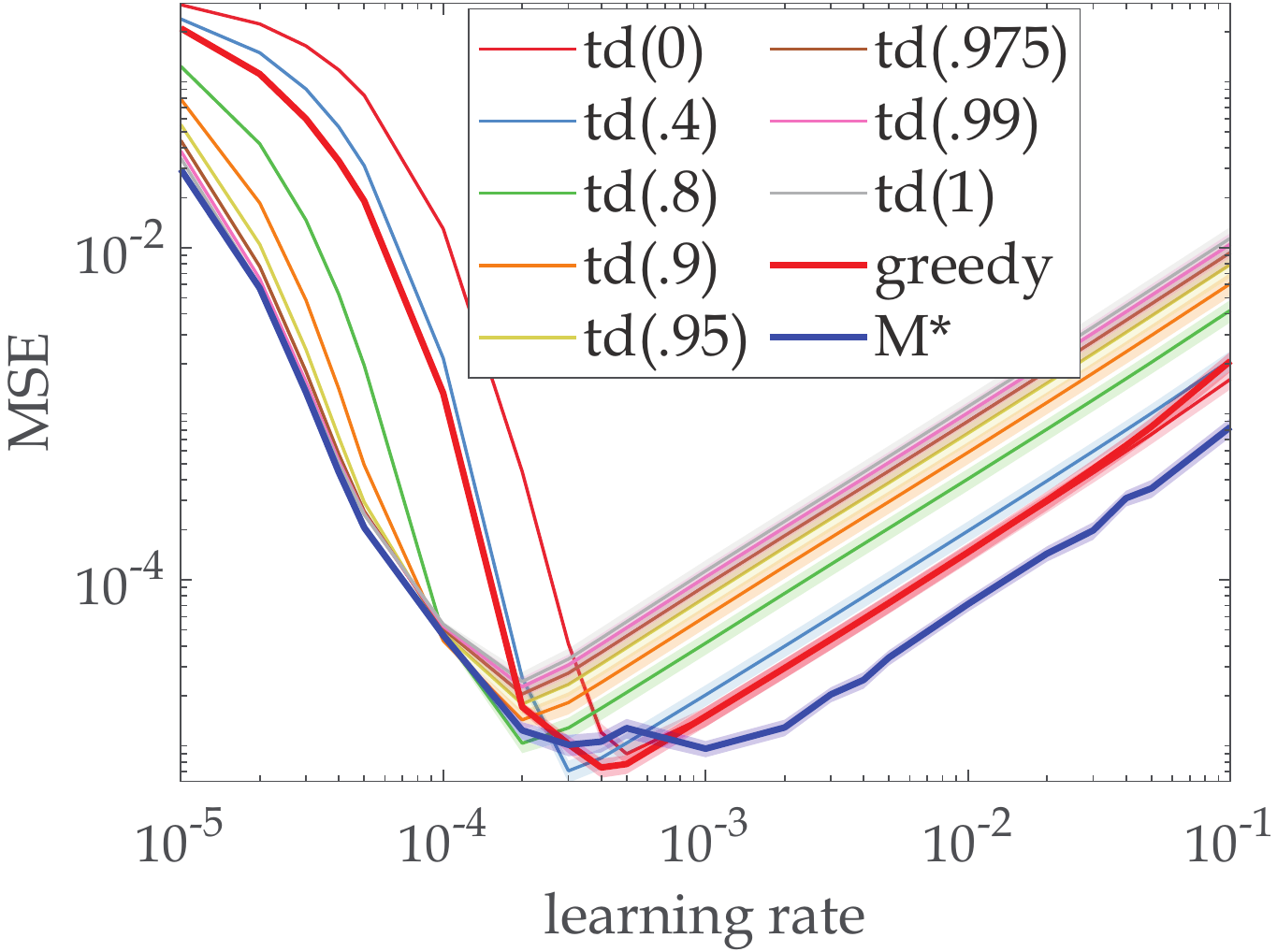}}
\hfill
\subfloat[FrozenLake, $\gamma = 0.95$]{
\captionsetup{justification = centering}
\includegraphics[width=0.3\textwidth]{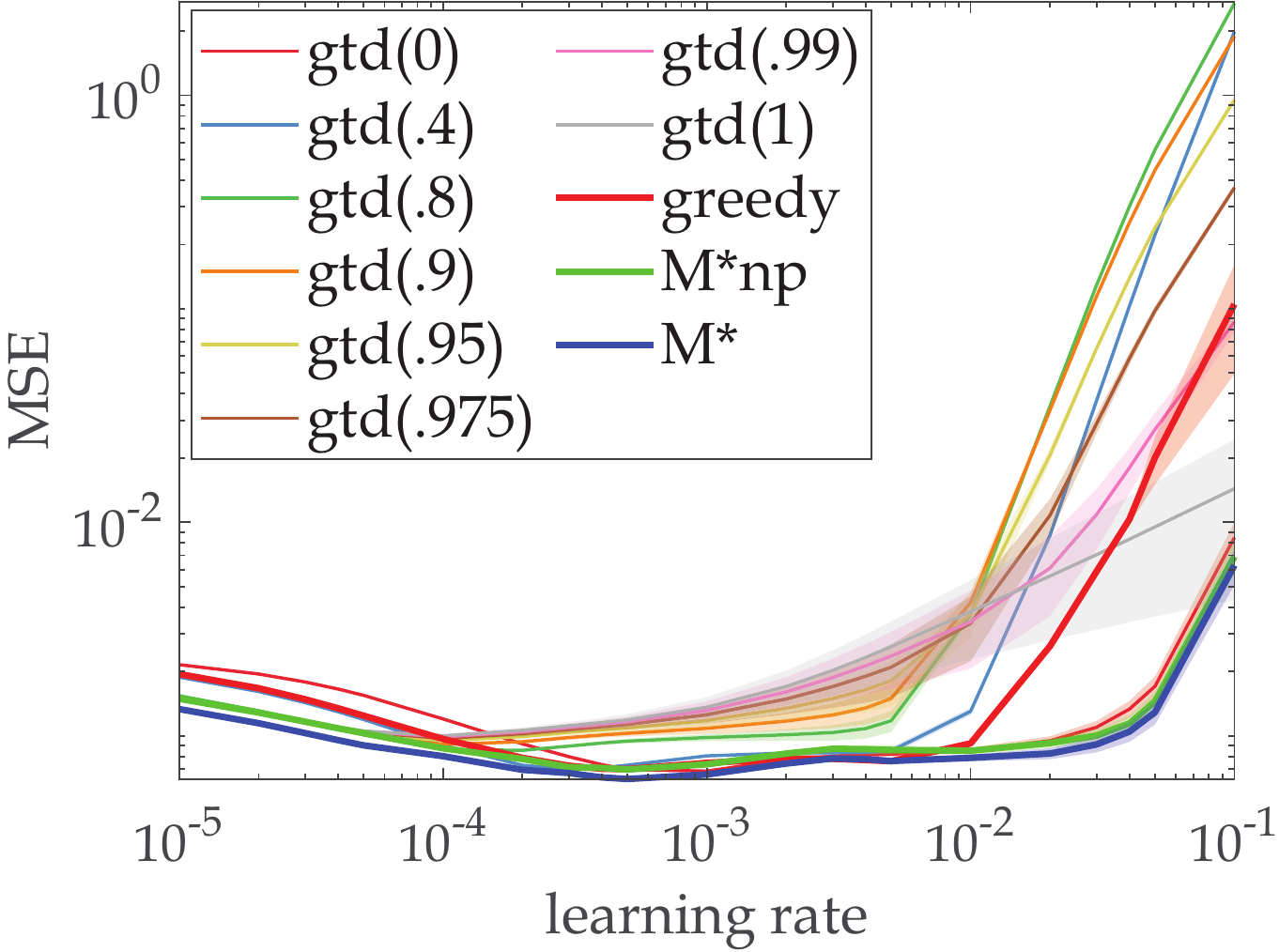}}
\hfill
\subfloat[MountainCar, $\gamma = 1$, $\eta = 1$]{
\captionsetup{justification = centering}
\includegraphics[width=0.3\textwidth]{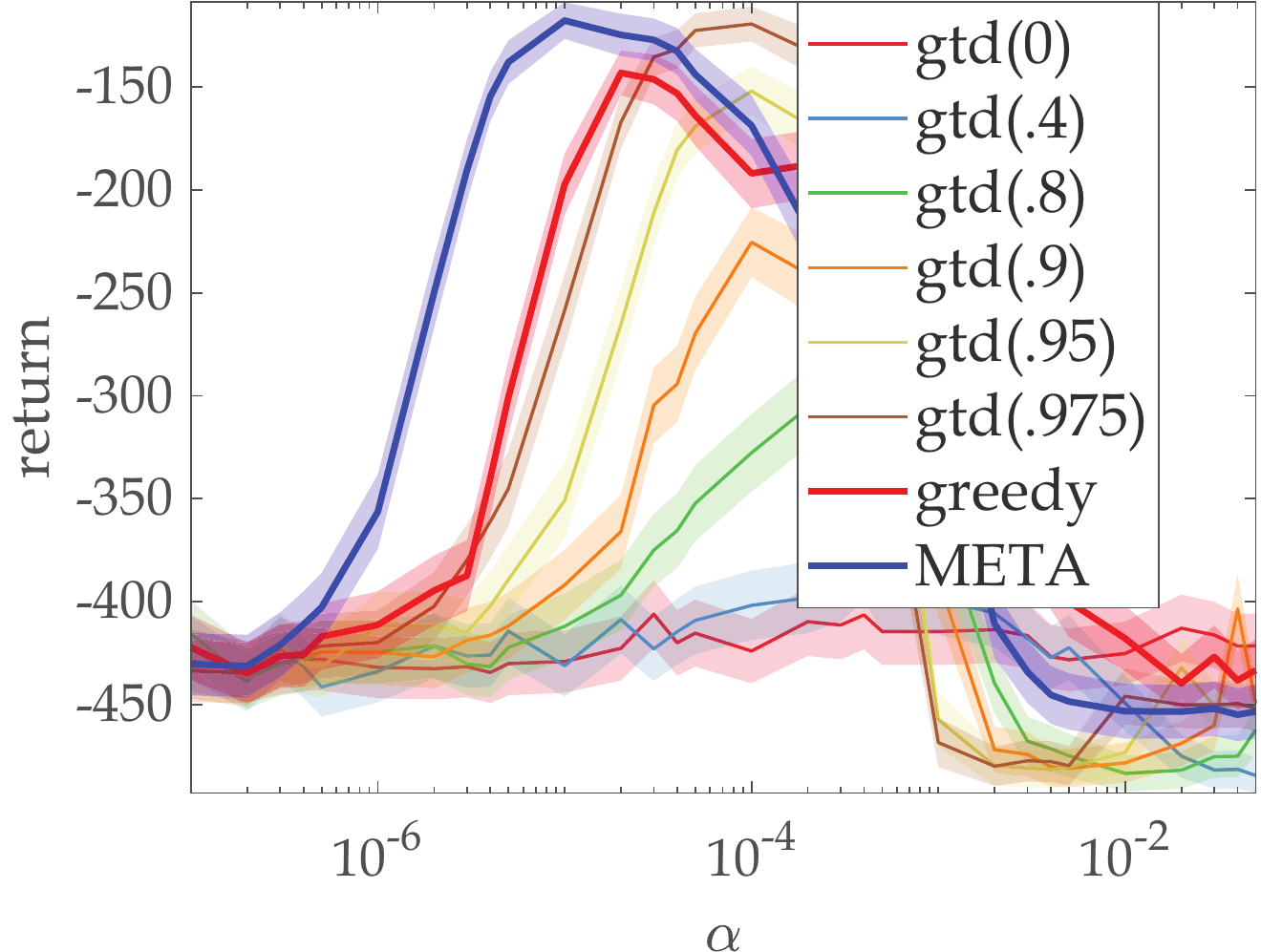}}

\subfloat[RingWorld, $\alpha = 0.01$, $\kappa = 0.01$]{
\captionsetup{justification = centering}
\includegraphics[width=0.3\textwidth]{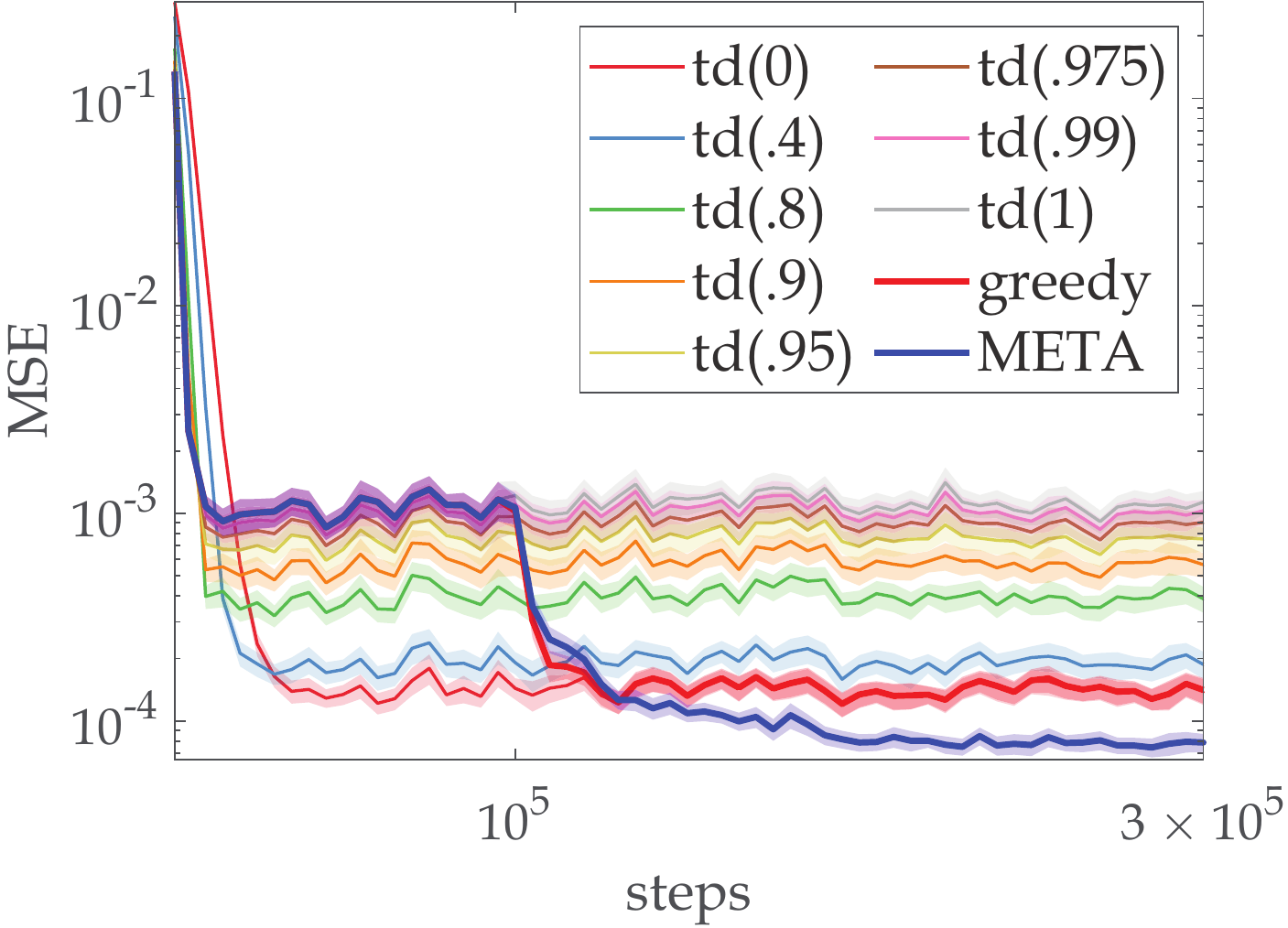}}
\hfill
\subfloat[FrozenLake, $\alpha = \beta = 0.0001$, $\kappa = 10^{-5}$, $\kappa_{\text{np}} = 10^{-4}$]{
\captionsetup{justification = centering}
\includegraphics[width=0.3\textwidth]{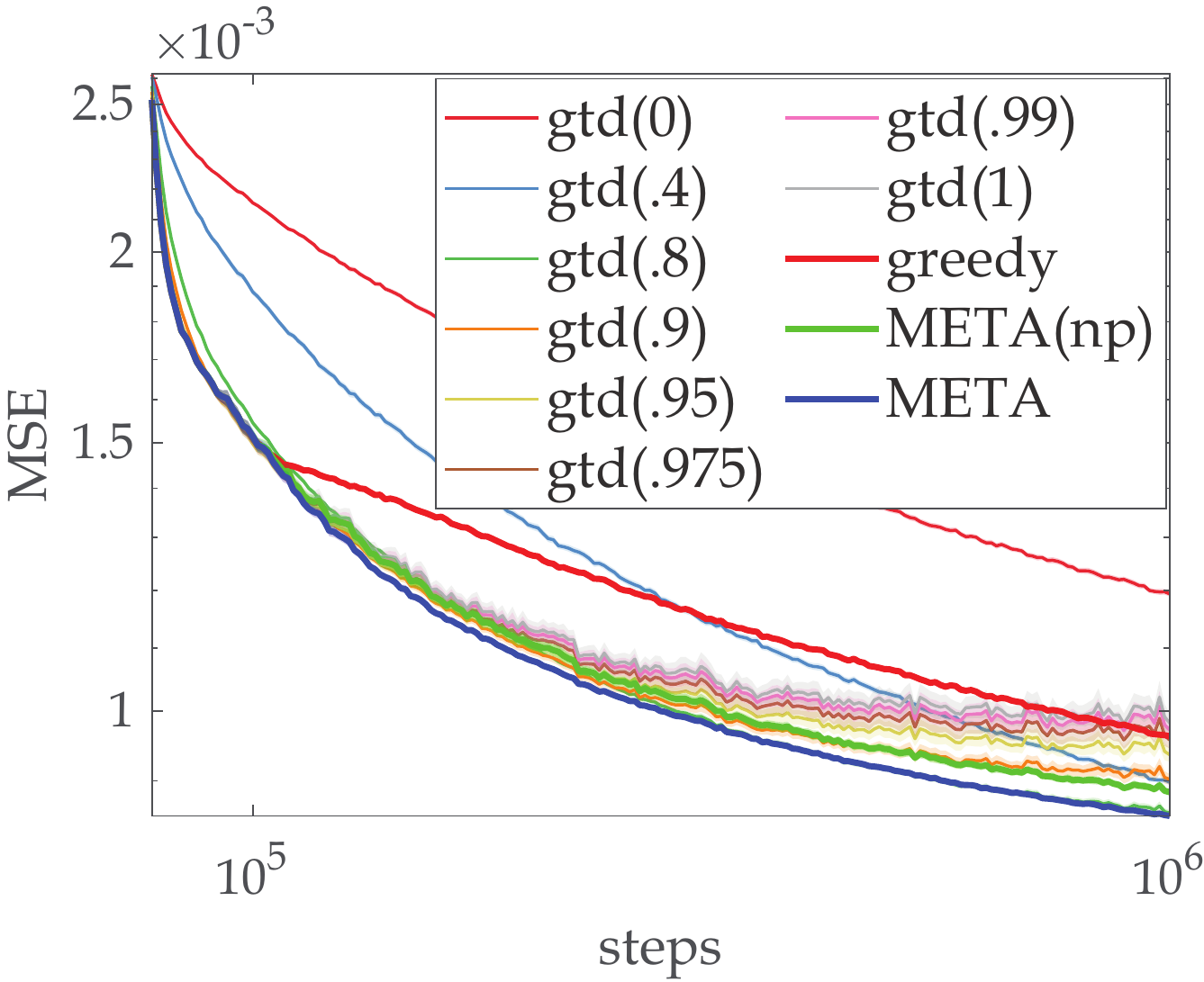}}
\hfill
\subfloat[MountainCar, $\alpha = \beta = 10^{-5}$, $\eta =1$, $\kappa = 10^{-5}$]{
\captionsetup{justification = centering}
\includegraphics[width=0.3\textwidth]{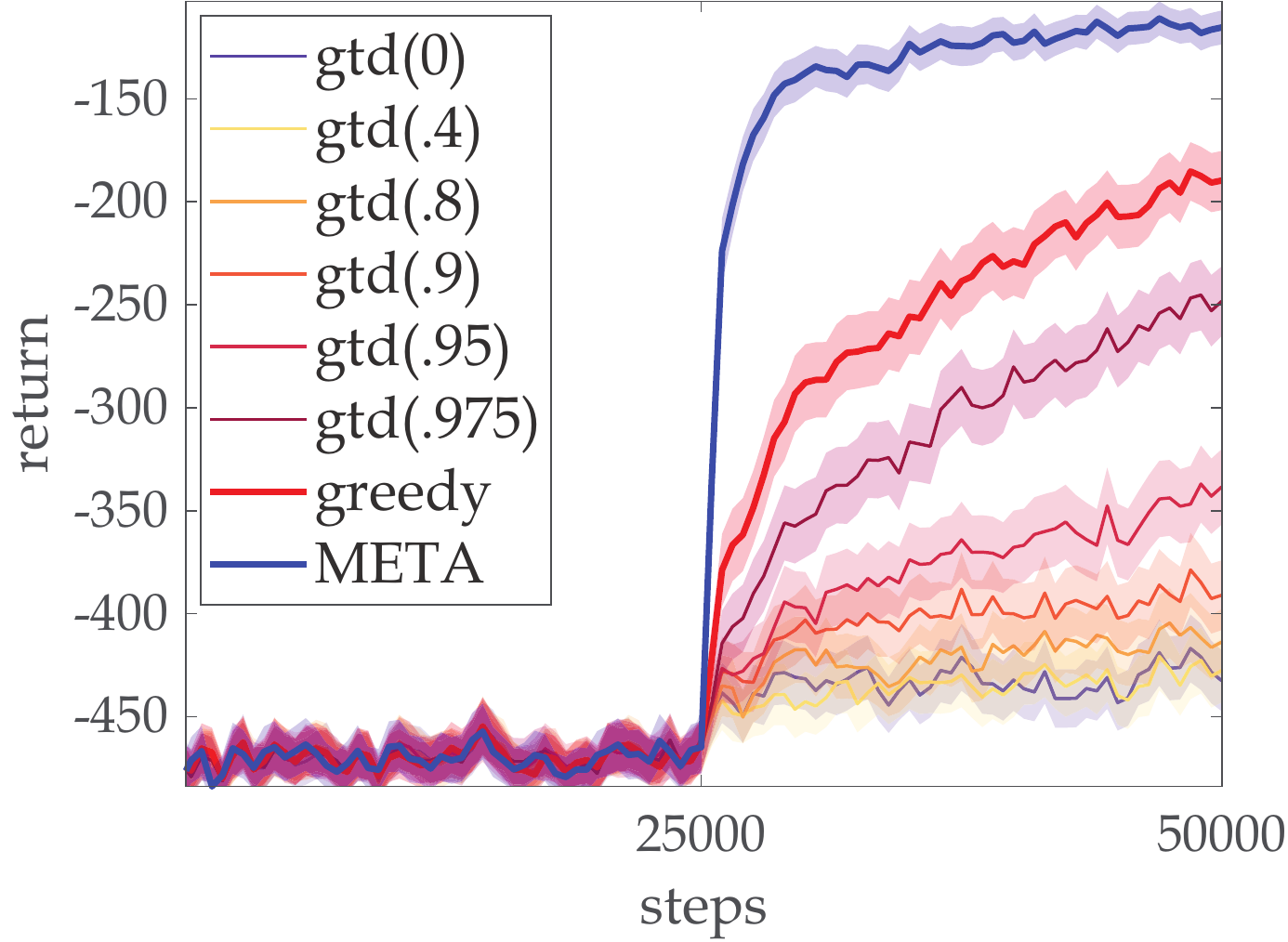}}

\caption{U-shaped curves and learning curves for \algoname{}, $\lambda$-greedy and the baselines on RingWorld, FrozenLake and MountainCar. For (a), (b) and (c), $x$-axes represent the values of the learning rate $\alpha$ for prediction (or the critic), while $y$-axes represent the overall value error for RingWorld and FrozenLake, or the cumulative discounted return for MountainCar. Each point in the graphs is featured with the mean (solid) and standard deviation (shaded) collected from $240$ independent runs, with $10^{6}$ steps for prediction and $50000$ steps for control. The blue curves, either with legend ``META'' or ``M*'', represent the results of \algoname{}; For (d), (e), and (f), the $x$-axes represent the steps. We choose one representative case for each of the corresponding U-shaped curve for better demonstration of the empirical performance of \algoname{}. In these learning curves, the best known hyperparameters are used. The buffer periods are $10^{5}$ steps ($10\%$) for prediction and $25000$ steps ($50\%$) for control, respectively.} 
\label{fig:U}
\end{figure*}

\subsection{RingWorld: Tabular-Case Prediction}
This set of experiments focuses on a low-variance environment, the $11$-state ``ringworld'' \cite{white2016greedy}, in which the agent move either left or right in a ring of states. The state transitions are deterministic and rewards only appear in the terminal states. In this set of experiments, we stick to the tabular setting and use true online TD($\bm{\lambda}$) \cite{hasselt2014true} as the learner\footnote{We prefer true online algorithms since they achieve the exact equivalence of the bi-directional view of $\lambda$-returns.}, for the value estimate as well as all the auxiliary estimates. As discussed in \ref{subsection:reliance}, for the accuracy of the auxiliary learners, we double their learning rate \st{} they can adapt to the changes of the estimates faster. We select a pair of behavior-target policies: the behavior policy goes left with $0.4$ probability while the target policy goes with $0.35$. The baseline true online TD has $2$ hyperparameters ($\alpha$ \& $\lambda$) and so does \algoname{} ($\alpha$ \& $\kappa$), excluding those for the auxiliary learners. For these two methods, we test them on grids of hyperparameter pairs. More specifically, for the baseline true online TD, we test it on $\langle \alpha, \lambda \rangle \in \{ 10^{-5}, \dots, 5\times10^{-5}, 10^{-4}, \dots, 5\times10^{-4}, \dots, 5\times10^{-2}, 10^{-1} \} \times \{0, 0.4, 0.8, 0.9, 0.95, 0.975, 0.99, 1\}$ while for \algoname{}, $\langle \alpha, \kappa \rangle \in \{ 10^{-5}, \dots, 5\times10^{-5}, 10^{-4}, \dots, 5\times10^{-4}, \dots, 5\times10^{-2}, 10^{-1} \} \times \{10^{-7}, \dots, 10^{-1}\}$. The results are presented as the U-shaped curves in Figure \ref{fig:U} (a), in which we demonstrate the curves of the baseline under different $\lambda$s and the best performance that \algoname{} could get under each learning rate.
\par
The best performance of fine-tuned baselines can be extracted from the figures by combining the lowest points of the set of the baseline curves under different $\lambda$s. Fine-tuned \algoname{} provides better performance, especially when the learning rate is relatively high. We can say that once \algoname{} is fine-tuned, it provides significantly better performance that the baseline algorithm cannot possibly achieve since it meta-learns state-based $\bm{\lambda}$ that goes beyond the scope of the optimization of the baseline. Such results can also be interpreted as \algoname{} being less sensitive to the learning rate than the baseline true online TD.

\subsection{FrozenLake: Feature-based Prediction}

This set of experiments features on a high-variance environment, the ``4x4'' FrozenLake, in which the agent seeks to fetch the frisbee back on a frozen lake surface with holes from the northwest to the southeast and the transitions are noisy. There are $4$ actions, each representing taking $1$-step towards $1$ of the $4$ directions. We craft a behavior policy that takes $4$ actions with equal probabilities and a target policy that has $0.3$ probability for going south or east, $0.2$ for going north or west. We use the linear function approximation based true online GTD($\lambda$), with a discrete tile coding ($4$ tiles, $4$ offsets). For the 2nd learning rate $\beta$ introduced in true online GTD($\lambda$), we set them to be the same as $\alpha$ (for the value learners as well as the auxiliary learners in all the compared algorithms). Additionally, we remove the parametric setting of $\bm{\lambda}$ to get a method as ``\algoname{}(np)'' to demonstrate the potentials of a parametric feature (observation) based $\bm{\lambda}$. The U-shaped curves, obtained using the exact same settings as in RingWorld, are provided in Figure \ref{fig:U} (b).
\par
We observe similar patterns as the 1st set of experiments. We can see that the generalization provided by the parametric $\bm{\lambda}$ is beneficial, as in (b) we observe generally better performance and in (e) we see that a parametric $\lambda$ has better sample efficiency, comparing with ``\algoname{}(np)''. This suggests that using parametric $\lambda$ in environments with relatively smooth dynamics would be generally beneficial for sample efficiency.

\subsection{MountainCar: Actor-Critic Control}

In this set of experiments we investigate the use of \algoname{} to assist on-policy actor-critic control on a noisy version of the environment MountainCar with tile-coded state features. We use a softmax policy parameterized by a $|A| \times D$ matrix, where $D$ is the dimension of the state features with also true online GTD($\lambda$) as the learners (critics). This time, the U-shaped curves presented in Figure \ref{fig:U}(c) show performance better than the baselines yet significantly better than $\lambda$-greedy assisted actor-critic.

In this set of experiments we intentionally set the stepsize of the gradient ascent of the policy to be high ($\eta = 1$) to emphasize the quality of policy evaluation. However, typically in actor-critic we keep $\eta$ small. In these cases, the assistance of \algoname{} is expected to be greatly undermined: the maximization of returns cares more about the actions chosen rather than the accuracy of the value estimates. Enhancing the policy evaluation quality may not be sufficient for increasing the sample efficiency of control problems.

From the curves we can see the most significant improvements are shown when the learning rate of the critic is small. Typically in actor-critic, we set the learning rate of the critic to be higher than the actor to improve the quality of the update of the actor. \algoname{} alleviates the requirement for such setting (or we could say a kind of sensitivity) by boosting the sample efficiency of the critic.

\subsection{Technical Details}
\subsubsection{Environments}
The RingWorld environment is reproduced as described in \cite{white2016greedy}. Due to limitations of understanding, we cannot see the difference between it and a random walk environment with the rewards on the two tails. RingWorld is described as a symmetric ring of the states with the starting state at the top-middle, for which we think the number of states should be odd. However, the authors claimed that they experimented with $10$-state and $50$-state instances. We instead used the $11$-state instance.

We removed the episode length limit of the FrozenLake environment (for the environment to be solvable by dynamic programming). It is modified based on the Gym environment with the same name. We have used the instance of ``4x4'', \ie{} with $16$ states.

The episode length limit of MountainCar is also removed. We also added noise to the state transitions: actions will be randomized at $20\%$ probability. The noise is to prevent the cases in which $\lambda = 1$ yields the best performance (to prevent \algoname{} from using extremely small $\kappa$'s to get good performance). Additionally, due to the poor exploration of the softmax policy, we extended the starting location to be uniformly anywhere from the left to right on the slopes.

\subsubsection{State Features}
For RingWorld, we used onehot encoding to get equivalence to tabular case; For FrozenLake, we used a discrete variant of tile coding, for which there are $4$ tilings, with each tile covering one grid as well as symmetric and even offset; For MountainCar, we adopted the roughly the same setting as Chap. 10.1 pp. 245 in \cite{sutton2018reinforcement}, except that we used ordinary symmetric and even offset instead of the asymmetric offset.

\subsubsection{About ${\bm{\lambda}}$-greedy}
We have replaced VTD \cite{white2016greedy} with direct VTD \cite{sherstan2018directly}. This modification is expected only to improve the stability, without touching the core mechanisms of $\lambda$-greedy \cite{white2016greedy}.

The target used in Whites' \cite{white2016greedy} is biased toward $\lambda = 1$, as the $\lambda$'s into the future are assumed to be $1$. Thus we do not think it is helpful to conduct tests on environments with very low variance. This is the reason why we have changed the policies to less greedy.

\subsubsection{Learning Rate and Buffer Period}
The learning rates of the auxiliary learners are set to be twice of the value learner. These settings were not considered in \cite{white2016greedy}, in which there were no buffer period and identical learning rates were used for all learners; For the control task of MountainCar, $\lambda$-greedy and \algoname{} will both perform badly without these additional settings, since they are adapting $\bm{\lambda}$ based on untrustworthy estimates.

\subsubsection{Details for Non-Parametric $\lambda(\cdot)$}
To disable the generalization of the parametric $\bm{{\bm{\lambda}}}$ for ``\algoname{}(np)'', we replaced the feature vectors for each state with onehot-encoded features.

\subsubsection{More Policies for Prediction}
For RingWorld, we have done $6$ different behavior-target policy pairs (3 on-policy \& 3 off-policy). The off-policy pair that we have shown in the manuscript shares the same patterns as the rest of the pairs. The accuracy improvement brought by \algoname{} is significant across these pairs of policies; For FrozenLake, we have done two pairs of policies (on- and off-policy). We observe the same pattern as in the RingWorld tests.

\subsubsection{Implementation of \algoname{}}
Due to the estimation instability, updates could bring state $\lambda$ values outside $[0, 1]$. Whenever such kind of update is detected, it will be canceled.

\section{Conclusion and Future Work}
In this paper, we derived a general method \algoname{} for boosting the sample efficiency of TD prediction, by approximately optimizing the overall target error, using meta-learning of state dependent $\lambda$s. In the experiments, \algoname{} demonstrates promising performance as a way to accelerate learning.

In the future, we aim to benchmark the approach in more environments, and in general. We would also like to study further the issue of improving optimizers for RL specifically.

\subsection*{Acknowledgements}

Funding for this research was provided in part by NSERC, through Discovery grants for Prof. Chang and Precup, and CIFAR, through a CCAI chair to Prof. Precup. We are grateful to Compute Canada for providing a shared cluster for experimentation.

\clearpage
\bibliographystyle{ACM-Reference-Format}
\bibliography{references.bib}

\end{document}